\newtheorem{definition}{Definition}
\newtheorem{theorem}[definition]{Theorem}
\newtheorem{lemma}[definition]{Lemma}
\newtheorem{proposition}[definition]{Proposition}
\newtheorem{example}[definition]{Example}
\newtheorem{remark}[definition]{Remark}
\newcommand{\lnon}{\overline}
\newcommand{\cvec}{{\bf c}}
\newcommand{\xvec}{{\bf x}}
\newcommand{\tvec}{{\bf t}}
\newcommand{\ds}{\displaystyle}
\newcommand{\twopartdef}[4]
{	\left\{
	\begin{array}{ll}
		#1 & \mbox{if } #2 \\
		#3 & \mbox{if } #4
	\end{array}
	\right.
}
\newcommand{\threepartdef}[6]
{	\left\{
	\begin{array}{ll}
		#1 & \mbox{if } #2 \\
		#3 & \mbox{if } #4  \\
		#5 & \mbox{if } #6
	\end{array}
	\right.
}
\begin{document}

\title{On the Trade-off between the Number of Nodes and the Number of Trees
in a Random Forest} 

\author[1]{Tatsuya Akutsu\thanks{Corresponding author.
Partially supported by JSPS KAKENHI \#JP22H00532 and \#JP22K19830.}}
\author[2]{Avraham A. Melkman}
\author[3]{Atsuhiro Takasu}
\affil[1]{Bioinformatics Center, Institute for Chemical Research, Kyoto University, Japan}
\affil[2]{Department of Computer Science, Ben-Gurion University of the Negev, Israel}
\affil[3]{National Institute of Informatics, Chiyoda-ku, Tokyo, Japan}

\maketitle

\begin{abstract}
In this paper, we focus on the prediction phase of a random forest and
study the problem of representing a bag of decision trees
using a smaller bag of decision trees,
where we only consider binary decision problems on the binary domain
and simple decision trees in which 
an internal node is limited to querying 
the Boolean value of a single variable.
As a main result,
we show that the majority function of $n$ variables
can be represented by a bag of $T$ ($< n$) decision trees
each with polynomial size if $n-T$ is a constant,
where $n$ and $T$ must be odd (in order to avoid the tie break).
We also show that a bag of $n$ decision trees
can be represented by a bag of $T$ decision trees
each with polynomial size if $n-T$ is a constant and
a small classification error is allowed.
A related result on the $k$-out-of-$n$ functions is presented too.

\noindent
{\bf Keywords:}
Random forest, decision tree, majority function, Boolean function.
\end{abstract}

\section{Introduction}

One of the major machine learning models is a random forest.
The model was proposed by Breiman in 2001 \cite{breiman01},
and is represented as a bag of trees.
When used for classification tasks its decision is the majority vote of the outputs of classification trees,
whereas for regression tasks
its prediction is the average over the outputs of 
regression trees.
In spite of the simplicity of the model,
its predictive accuracy is high in practice. Consequently, it
has been applied to a number of classification and regression tasks
\cite{delgado14}, and it remains a strong option even 
as various kinds of deep neural networks (DNN) have been put forward
for these tasks.
For example, Xu et al. compared DNNs and decision forests
and reported that deep forests achieved better results than DNNs when the training
data were limited \cite{xu21}.
Random forests are also used as a building block for deep 
networks, e.g.
\cite{zhou17}.
Because of its strong predictive power,
the learnability of the random forest has been extensively studied.
For example, Breiman
showed that the generalization error of a random forest
depends on the strength of the individual trees in the forest and
their correlations \cite{breiman01}.
Lorenzen et al. analyzed the generalization error \cite{lorenzen19}
from a PAC (Probably Approximately Correct) learning viewpoint \cite{valiant84}.
Biau showed that the rate of convergence depends only on the number of strong
features \cite{biau12}.
Oshiro et al. empirically studied the relations between the number of trees
and the prediction accuracy \cite{oshiro12}.
Recently, Audemard et al. studied the complexity of finding small explanations 
on random forests \cite{audemard22}.

From a theoretical computer science perspective,
extensive studies have been done on the circuit complexity 
\cite{amano18,engels20,goldmann92,kulikov19,siu95,testa19}
and the decision tree complexity \cite{chistopolskaya22,magniez16} for
realizing the majority function, and more generally, linear threshold functions.
For example,
Chistopolskaya and Podolskii studied lower bounds on the size of the decision
tree for representing the threshold functions and the parity function \cite{chistopolskaya22}.
In particular, they showed an $n-o(1)$ lower bound when each node in a decision tree is a Boolean function
of two input variables,
where $n$ is the number of variables.
Extensive studies have also been done on representing majority functions of larger
fan-in by the majority of majority functions of lower fan-in
 \cite{amano18,engels20,kulikov19}.

Since it is empirically known that the number of trees affects
the prediction performance \cite{oshiro12},
it is of interest 
to theoretically study the trade-off between
the number of trees and the size of trees.
However, to our knowledge, there is almost no theoretical study
on the size and number of decision trees in a random forest.

Recently, Kumano and Akutsu studied embedding of random forests and
binary decision diagrams into layered neural networks with sigmoid,
ReLU, and similar activation functions \cite{kumano22}.
They also showed that 
$\ds{\Omega\left( \left( {\frac {2^n}{\sqrt{n}}}\right)^{2/(T+1)} \right)}$
nodes are needed to represent the majority function of $n$ variables 
using a random forest consisting of $T$ decision trees.
However, 
deriving an upper bound on the number of nodes in a random forest was 
left as an open problem,
without specifying the target function.
The main purpose of this paper is to partially answer this (a bit ambiguous)
question.
In this paper, we focus on the prediction phase of random forests
for binary classification problems with binary input variables.
That is, we consider binary classification problems for which
the decision is made by the majority vote of a bag of decision trees.
Furthermore, we consider a simple decision tree model in which
each internal node asks whether some input variable has value 1 or 0.
Since we focus on the prediction phase of a random forest,
we use ``random forest'' to mean 
a classifier whose decision is made according to
the majority vote of the decision trees in the forest, 
as in \cite{audemard22}.

We show that the majority function on $n=2m-1$ variables can be represented
by a bag of $T$ ($< n$) trees in which the maximum size of each tree is
$O(n^{((n-T)/2)+1})$, where $n-T$ is assumed to be a constant.
This result means that the size of the
transformed trees is polynomially bounded.
We also consider more generally 
the problem of transforming a given bag of $n$ trees
to a bag of $T$ ($< n$) trees. However, the approach 
that was used for the majority function cannot be 
directly applied to the general problem.
We, therefore, modify the general problem 
to the task of transforming a given forest of $n$ trees
to a forest with fewer trees  
that is allowed to make small errors, 
in the sense that the 
total weight of the input vectors that give inconsistent outputs
divided by the total weight of all input vectors is small
(i.e., we consider an arbitrary probability distribution on samples).
Assuming that $c$ and $K$ are fixed positive integers, we show that 
for a bag of $2m-1$ decision trees each with size at most $r$,
there exists a bag of $2m-1-2c$ decision trees such that
the size of each tree
is $O(r^{(2K+11)^c})$ and the error is
at most ${\frac{c}{2K}}$.
In addition, as part of the derivation of our results on the majority function,
we show that the $k$-out-of-$n$ (majority) function, which is often referred to
as a threshold function, can be represented by a bag of $n$ decision trees 
each of size $O(n^{|m-k|+1})$.

\section{Preliminaries}

In this paper we consider simple decision trees over binary domains
and bags of simple decision trees,
defined as follows
(see also Fig.~\ref{fig:dt-rf}(A) and (B)).
\begin{definition}
A \emph{decision tree} is a rooted tree in which
a query is assigned to each internal 
node. The tree is simple if all queries are
limited to asking the value of
one input variable from a set of Boolean input variables 
$X=\{x_1,\ldots,x_n\}$,
and the leaves carry labels that are either 0 or 1.
The output for a given sample $\textbf{x}=(x_1,\ldots,x_n)$
is given by the label of the leaf that is reached from the root
by answering the queries.
The size of a decision tree is the number of nodes in the tree.
\end{definition}

\begin{definition}
A \emph{bag of decision trees} is a collection of $N=2M-1$ decision trees
$T_1,\ldots,T_N$
on a set of Boolean variables
$X=\{x_1, \ldots, x_n\}$
which on a given input returns the decision arrived at by a majority of the trees.
\end{definition}

In the coming two
sections we focus on the representation  
of the majority function, $\textit{Maj}$,
which returns 1
if the majority of its $n=2m-1$ (Boolean) variables has value 1,
by a bag of trees. 
That task is very easy when using a bag of $n$ trees: 
simply set $T_i=x_i$.
Representing it with a bag of less than $n$ trees takes more doing,
which is one of the main subjects of this paper.

A type of bag that will be especially useful in the following is 
one that uses $N$ trees to implement the
$k$-out-of-$n$ function 
$\textit{Maj}(k;x_1,\ldots , x_n)$:
the bag returns 1 
if and only if at least $k$ among its $n$ input values are 1.
We denote this type of bag by 
$C_{N}(k;x_1,\ldots , x_n)$, or more concisely where warranted by $C_{N}(k;\textbf{x})$. $C$ is 
mnemonic for 
\emph{Choose bag}, as it returns 1 if and only if it is possible to 
choose $k$ input variables that have the value 1.
For literals $z_1, z_2 \cdots, z_n$,
$z_1 z_2 \cdots z_n$ represents $z_1 \land z_2 \land \cdots \land z_n$.

\begin{figure}
\begin{center}
\includegraphics[width=12cm]{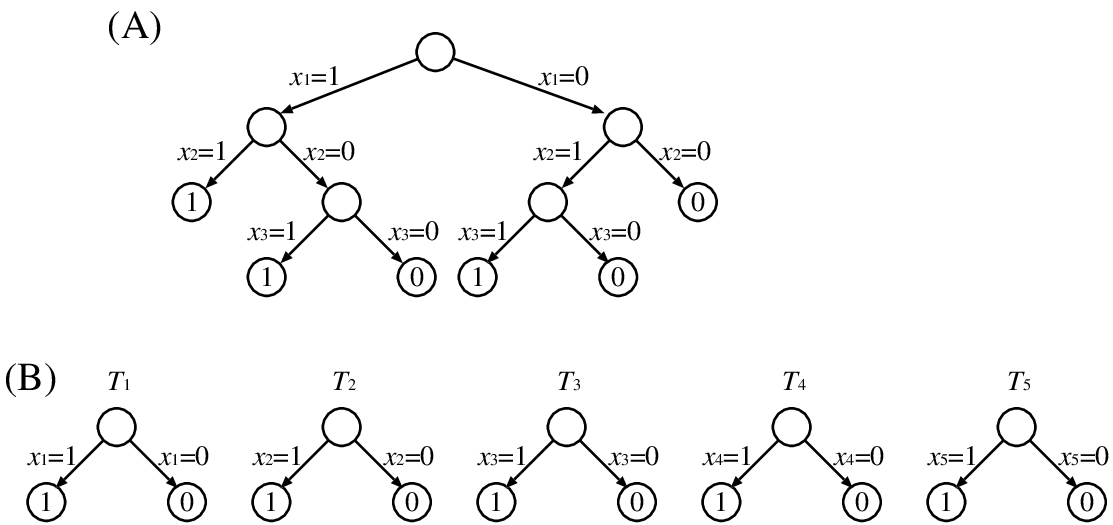}
\caption{(A) Decision tree representing the majority function on 3 variables,
and (B) Random forest (bag of trees) representing the majority function on
5 variables.}
\label{fig:dt-rf}
\end{center}
\end{figure}

\section{Bag of trees for the $k$-out-of-$n$ function}
\label{sec:k-of-n-2}

In this section, we present a method for constructing
a bag of decision trees for representing the $k$-out-of-$n$ function.
This method will be used as a kind of subroutine for
the main problem.

We begin by defining two types of (sub)trees that will appear in the
implementation. We define these trees by describing which inputs they 
decide to accept, or by the corresponding Boolean formula.
\begin{enumerate}
\item The (sub)tree $L_{\ell}(i)$ accepts \textbf{b} if and only if $b_i=0$ and 
it is the leftmost, or second leftmost, or \ldots, the $\ell$-th leftmost 0 in \textbf{b}.
I.e., there are at most
$1\leq p\leq \ell$ indices  
$i_1 \leq \ldots <i_p  = i$ such that 
$b_{i_1} =\ldots =b_{i_{p}}=0$.

For example, suppose that $n=3$ (or larger).
Then, we have
\begin{eqnarray*}
L_1(3) & = & x_1 \land x_2 \land \lnon{x_3},\\
L_2(3) & = & (x_1 \land x_2 \land \lnon{x_3}) \lor
(\lnon{x_1} \land x_2 \land \lnon{x_3}) \lor
(x_1 \land \lnon{x_2} \land \lnon{x_3}).
\end{eqnarray*}
Note that all of $x_1,\ldots,x_{b_i}$ appear in each conjunction.
Then, we can construct a decision tree by simply branching
on $x_1$ at the root,  on $x_2$ at the depth 1 nodes,
on $x_3$ at the depth 2 nodes, and so on.
Since it is enough to consider at most 
\begin{eqnarray*}
\sum_{i=0}^{\ell-1} \binom{n}{i} & \leq & O(n^{\ell-1})
\end{eqnarray*}
combinations
and the number of nodes from the root to each leaf is at most $n+1$.
$L_{\ell}(i)$ can be represented by a decision tree
of size $O(n^{\ell})$,
where the size of a decision tree is the number of its nodes.

\item $I_{\ell}(i)$ is a (sub)tree which accepts an input \textbf{b}
if and only if $b_i=1$ and \textbf{b} 
contains at least $\ell$ values 1 after $b_i$, i.e.
$b_{i_1} =\ldots =b_{i_{\ell}}=1$ for some 
$i <i_1 <\ldots <i_{\ell}\leq 2m-1$ and
$b_j=0$ for the other $j$s such that $i < j < i_{\ell}$.
It will be helpful to visualize the interval $[i_j,i_{j+1}]$
as a ``ones-interval'', with $I_{\ell}(i)$ verifying that,
starting at $i$, there are $\ell$ contiguous ones-intervals in the input.
We will call these $\ell$ contiguous ones-intervals a 
\textit{``ones-interval of length $\ell$''}.
	
For example, suppose that $n=5$.
Then, we have
\begin{eqnarray*}
I_1(2) & = & (x_2 \land x_3) \lor (x_2 \land \lnon{x_3} \land x_4) \lor
(x_2 \land \lnon{x_3} \land \lnon{x_4} \land x_5),\\
I_2(2) & = & (x_2 \land x_3 \land x_4) \lor (x_2 \land x_3 \land \lnon{x_4} \land x_5) \lor
(x_2 \land \lnon{x_3} \land x_4 \land x_5).
\end{eqnarray*}
Note that each term contains consecutive variables (i.e.,
$x_i,x_{i+1},\ldots,x_j$ for some $j$) beginning from the same variable
$x_i$.
Then, we can construct a decision tree by simply branching
on $x_i$ at the root,  on $x_{i+1}$ at the depth 1 nodes,
on $x_{i+2}$ at the depth 2 nodes, and so on.
Since it is enough to consider at most $\binom{n}{\ell}$ terms
and the number of nodes from the root to each leaf is at most $n+1$.
$I_{\ell}(i)$ can be represented by a decision tree of size $O(n^{\ell+1})$.
\end{enumerate}

As a function of $k$, the trees of $C_{n}(k;\textbf{x})$ are as follows:
\begin{enumerate}
\item For $k=m-\ell$ with $1\leq \ell \leq m-1$,
\begin{center}
\begin{tabular}{|l|l|l|l|l|l|}
\hline	
$T_1$       & $\cdots$   & $T_{\ell}$ 
& $T_{\ell+1}$ 
& $\cdots$  &$T_{2m-1}$\\
\hline
\hline

$1$  & $ \cdots $ &  $1	$ 
& $x_{\ell+1}\lor L_{\ell}(\ell+1)$   
& $\cdots$                  &  $x_{2m-1}\lor L_{\ell}(2m-1)$\\
\hline		
\end{tabular}
\end{center}
Note that $T_i$ for $i \leq \ell$ corresponds to $x_i \lor L_{\ell}(i)$.
However, each of them is always 1, and thus 1 is given in this table.
\item For $k=m$, 
\begin{center}
\begin{tabular}{|l|l|l|}
\hline	
$T_1$       & $\cdots$     &$T_{2m-1}$\\
\hline
\hline
			
$x_1$  & $ \cdots $ &  $x_{2m-1}$\\
\hline		
\end{tabular}
\end{center}
\item For $k=m+\ell$ with $1\leq \ell \leq m-1$,
\begin{center}
\begin{tabular}{|l|l|l|l|l|l|}
\hline	
$T_1$       & $\cdots$   & $T_{2m-\ell-1}$ 
& $T_{2m-\ell}$ 
& $\cdots$  &$T_{2m-1}$\\
\hline
\hline
			
$I_{\ell}(1)$  & $ \cdots $ &  $I_{\ell}(2m-\ell-1)$ 
& 0   
& $\cdots$                  &  0\\
\hline		
\end{tabular}
\end{center}

\end{enumerate}

For the first case, $T_i$ for $i \geq \ell+1$ has 
To represent this Boolean function by a decision tree,
it is enough to add a new root (corresponding to $x_i$)
to the decision tree for $L_{\ell}(i)$,
where the new root has two children: one is a leaf with label 1,
the other is the root of the tree for $L_{\ell}(i)$.
 
\begin{theorem}
For fixed $k$,
$C_n(k;\textbf{x})$ can be represented by a bag of decision trees
each of which has size $O(n^{|m-k|+1})$.
\label{thm:k-outof-N2}
\end{theorem}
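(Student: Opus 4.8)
The plan is to take the three explicit constructions of $C_n(k;\textbf{x})$ tabulated above as given and verify two things for each: that the majority vote of the listed trees computes the $k$-out-of-$n$ function, and that each tree meets the stated size bound. The size bounds require essentially no new work: in every case each tree is either a constant ($0$ or $1$, size $O(1)$), a single literal $x_i$ (size $O(1)$), the function $x_i \lor L_{\ell}(i)$, or the function $I_{\ell}(i)$. The sizes $O(n^{\ell})$ for $L_{\ell}(i)$ and $O(n^{\ell+1})$ for $I_{\ell}(i)$ were already established when these subtrees were introduced, and prepending a single root to $L_{\ell}(i)$ to realize $x_i \lor L_{\ell}(i)$ does not change the order of its size. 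Since $|m-k| = \ell$ in the first and third cases and $|m-k| = 0$ in the second, every tree has size $O(n^{|m-k|+1})$. Thus the crux is correctness, which I would argue by counting, for an arbitrary input $\textbf{b}$ with exactly $S$ ones (hence $Z = n - S$ zeros), how many of the $2m-1$ trees output $1$, and checking that this count reaches the majority threshold $m$ precisely when $S \geq k$.

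For the case $k = m$ the claim is immediate: the trees are $T_i = x_i$, so the number of $1$-outputs is $S$, and the majority vote returns $1$ iff $S \geq m = k$. For the case $k = m - \ell$ I would first note that for $i \leq \ell$ the tree $x_i \lor L_{\ell}(i)$ is identically $1$: if $b_i = 0$ then the zero in position $i$ has rank at most $i \leq \ell$ among all zeros, so $L_{\ell}(i)$ accepts. Hence whether one reads the first $\ell$ entries as the constant $1$ or as $x_i \lor L_{\ell}(i)$, the count of $1$-outputs is the same, namely $S + \min(Z,\ell)$: each of the $S$ ones makes its tree output $1$ through the $x_i$ disjunct, and among the zeros exactly the $\min(Z,\ell)$ leftmost ones are accepted by the corresponding $L_{\ell}(i)$. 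If $Z \leq \ell$ this count equals $2m-1 \geq m$, and indeed $S = n - Z \geq n - \ell = 2m-1-\ell \geq m-\ell = k$, so a $1$-output is correct; if $Z > \ell$ the count is $S + \ell$, which is $\geq m$ iff $S \geq m - \ell = k$. Either way the majority vote returns $1$ exactly when $S \geq k$.

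For the case $k = m + \ell$ the analysis rests on the fact that $I_{\ell}(i)$ accepts $\textbf{b}$ exactly when $b_i = 1$ and at least $\ell$ ones occur strictly to the right of position $i$. Listing the positions of the $S$ ones as $p_1 < \cdots < p_S$, the tree $I_{\ell}(p_t)$ accepts iff there are at least $\ell$ ones after $p_t$, i.e.\ iff $t \leq S - \ell$; all other trees (including the $\ell$ trailing zeros) reject. The one potential subtlety is the truncation: only $T_1, \dots, T_{2m-\ell-1}$ are set to $I_{\ell}(i)$, the last $\ell$ being forced to $0$. I would resolve this by observing that any accepting one, at a position $p_t$ with $t \leq S - \ell$, has $\ell$ further ones to its right and therefore satisfies $p_t \leq 2m - 1 - \ell$, so no accepting $I_{\ell}$-tree is ever discarded by the truncation. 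Consequently the number of $1$-outputs is exactly $\max(S - \ell, 0)$, which is $\geq m$ iff $S \geq m + \ell = k$, completing the verification. Of the three cases this last one is the only place where any care is needed, precisely because of the interaction between the acceptance condition of $I_{\ell}$ and the truncation to the first $2m-\ell-1$ trees; once that interaction is pinned down, the result is a direct counting argument.
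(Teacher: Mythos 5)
Your proof is correct and verifies the same three-case construction as the paper, but your verification is organized differently: you compute, for an input with $S$ ones and $Z=n-S$ zeros, the exact number of trees that output $1$ ($S+\min(Z,\ell)$ in the case $k=m-\ell$, $\max(S-\ell,0)$ in the case $k=m+\ell$) and read off the equivalence with $S\geq k$ in one stroke, whereas the paper proves the two implications separately --- in particular its converse for $k=m-\ell$ goes through a four-parameter bookkeeping argument ($p_0,q_0,p_1,q_1$) that your closed-form count renders unnecessary. Your treatment of the case $k=m+\ell$ is also slightly more careful than the paper's: you explicitly check that any accepting index $p_t$ satisfies $p_t\leq 2m-1-\ell$ and therefore survives the truncation to $T_1,\ldots,T_{2m-\ell-1}$, a point the paper's forward direction uses only implicitly. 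Both arguments establish the same statement with the same size bounds; the exact-count formulation buys a shorter and more transparent correctness check at no cost in generality.
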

\begin{proof}
Let $C^n(k;\textbf{x})$ be the bag of decision trees constructed as above.
For simplicity,
we also represent the result of the majority vote of this bag on 
input  $\textbf{b}$  by $C^n(k;\textbf{b})$.
We need therefore to 
prove that $C^n(k;\textbf{b})=1$ if and only there are 
at least $k$ 1's in \textbf{b}. This is obviously true
in case $k=m$.

\begin{enumerate}
\item $k=m-\ell$ with $1\leq \ell \leq m-1$.

Suppose first that \textbf{b} contains at least $k$ 1's.
If \textbf{b} contains at least \textit{m} 1's then 
$C^{n}(k;\textbf{b})$ returns 1 per its definition.
Suppose then that \textbf{b} contains $m-p$ 1's,
with $1\leq p \leq \ell$. Noting that \textbf{b} contains more than
$\ell$ 0's, adjoin the indices of the $\ell$ leftmost ones to the set of 
of indices of the $m-p$ 1's. The resulting set has size $m-p+\ell\geq m$. Moreover, for each index $i$ in the set $x_i\lor L_{\ell}(i)$ has value
1 so that $T_i$ returns 1. Hence $C^{n}(k;\textbf{b})=1$. 

To prove the converse assume that $m$ of the trees in 
$C^{n}(k;\textbf{b})$ return 1 on input $\textbf{b}$.
There are therefore at least $m-\ell$ indices $i_j>\ell$
such that $b_{i_j}\lor L_{\ell}(i_j)$ for $i_j$.
Denote by $p_0$ and $q_0$ the number of indices $i$ such that 
$L_{\ell}(i)=1$ for $i\leq \ell$ and $i>\ell$, respectively,
and by $p_1$ and $q_1$ the number of indices $i$ such that 
$b_i=1$
for $i\leq \ell$ and $i>\ell$, respectively.
Then $p_0+p_1=\ell$, $p_0+q_0\leq \ell$ 
(by the definition of $L$), and $q_0+q_1 \geq m-\ell$. 
Hence the number of 1's in \textbf{b}, $p_1+q_1$, satisfies
$$p_1+q_1=\ell-p_0+q_1 \geq q_0+q_1 \geq m-\ell=k.$$

\item $k=m+\ell$ with $1\leq \ell \leq m-1$.

If \textbf{b} contains at least $m+\ell$ 1's at indices
$i_1<\ldots <i_{m+\ell}$, then  
each $i_j$, $1\leq j \leq m$, is 
the left endpoint of a ones-interval of length $\ell$ in \textbf{b}.
Therefore, for $1\leq j \leq m $, $I_{\ell}(i_j) =1$ and 
 $T_{i_j}$ returns 1. 
It follows that
$C^{n}(k;\textbf{b})=1$.

To prove the converse assume that $m$ of the trees in 
$C^{n}(k;\textbf{b})$ return 1 on input $\textbf{b}$.
There are therefore $i_1 < \cdots < i_m$ such that 
$I_{\ell}(i_j)=1$, and in particular $b_{i_j}=1$. These, together 
with the $\ell$ additional bits that make $I_{\ell}(i_m)=1$
show that $\textbf{b}$ contains $k$ 1's.
\end{enumerate}

Finally, it is seen from the discussions before this theorem
that the size of each decision tree is
\begin{eqnarray*}
O(n^{\ell}) = O(n^{m-k}) & & \mbox{for $1 \leq k < m$}.\\
O(1) & & \mbox{for $k=m$},\\
O(n^{\ell+1}) = O(n^{k-m+1}) & & \mbox{for $m < k \leq n$}.
\end{eqnarray*}
Therefore, for each $k$,
the size of each decision tree is $O(n^{|m-k|+1})$.
\end{proof}

Note that for small $k$,
the size given by this theorem is not better than 
that by a naive construction method,
in which 
all $k$ combinations of variables are represented by $T_1$
(as in Figure~\ref{fig:dt-rf}(A)),
each of $T_2,\ldots,T_m$ represents 1, 
each of $T_{m+1},\ldots,T_{2m-1}$ represents 0.
Analogously, for large $k$ (close to $n$),
the size given by this theorem is not better than 
that by a naive construction method.

It should also be noted that the case of $k=m-\ell$ and
the case of $k=m+\ell$ are not symmetric even if
we switch 0 and 1.
0 and 1 are interchanged.
Suppose $m=3$ (i.e., $n=2m-1=5$) and $k=m+1=4$.
In order to get a positive output,
at least four input variables should have value 1.
Here,
we switch 0 and 1 in the input variables and
consider the case of $k=m-1=2$.
In order to get a positive output,
at least 2 variables should have the switched value 1.
This means that at most 3 variables have the switched value 0,
corresponding to that at most 3 variables have the original value 1.
This example suggests that 
the cases of $k=m-\ell$ and $k=m+\ell$ are not symmetric.

\section{Majority function on $2m-1$ variables by $2m-1-2c$ trees}
\label{sec:majority}

In this section, we present one of our main results.

\begin{theorem}
For any positive integer constant $c$,
the majority function on $n=2m-1$ variables can be represented 
by a bag of $2m-1-2c$ decision trees where the maximum size of each tree is 
$O(n^{c+1})$.
\label{thm:majority}
\end{theorem}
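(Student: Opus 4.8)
The plan is to single out $2c$ of the variables---say $x_{n-2c+1},\ldots,x_n$---and condition on them. Writing $s=\sum_{i=n-2c+1}^{n} x_i$ for the number of ones among these \emph{special} variables ($0\le s\le 2c$), the majority function satisfies
\[
\textit{Maj}(x_1,\ldots,x_n)=1 \iff \#\{\,i\le n-2c : x_i=1\,\}\ge m-s,
\]
so that, once the special variables are fixed, $\textit{Maj}$ reduces to the $(m-s)$-out-of-$(n-2c)$ function $C_{n-2c}(m-s;x_1,\ldots,x_{n-2c})$ on the remaining variables. Since $n-2c=2(m-c)-1$, a bag of $n-2c$ trees decides by a majority threshold of $m-c$, and the key observation is that $m-c$ is exactly the ``middle'' value around which Theorem~\ref{thm:k-outof-N2} is stated, while the conditional target $k=m-s$ deviates from it by only $c-s\in[-c,c]$.

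First I would build the reduced bag of $N'=n-2c=2(m-c)-1$ trees $T'_1,\ldots,T'_{N'}$ as follows. Each $T'_j$ begins with a complete decision (sub)tree querying the $2c$ special variables; since $c$ is constant this top part has only $O(2^{2c})=O(1)$ nodes and $O(1)$ leaves. At the leaf reached for a setting of the special variables containing $s$ ones, $T'_j$ continues as the $j$-th tree of the bag realizing $C_{n-2c}(m-s;x_1,\ldots,x_{n-2c})$ guaranteed by Theorem~\ref{thm:k-outof-N2}. All $2c+1$ conditional constructions use exactly $N'$ trees indexed $1,\ldots,N'$, so this index-by-index merge is well defined, and every subtree queries only the non-special variables.

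To check correctness, fix any setting of the special variables with $s$ ones. Then each $T'_j$ evaluates precisely to the $j$-th tree of the $C_{n-2c}(m-s;\cdot)$ bag, so the number of reduced-bag trees returning $1$ equals the number returning $1$ in that bag; applying the threshold $m-c=\lceil N'/2\rceil$ therefore returns $C_{n-2c}(m-s;x_1,\ldots,x_{n-2c})$, which by the displayed equivalence equals $\textit{Maj}(x_1,\ldots,x_n)$. As this holds for every setting of the special variables, the bag of $N'=2m-1-2c$ trees computes the majority function. For the size, each $T'_j$ is an $O(1)$-size top part with $O(1)$ attached subtrees, each a tree of some $C_{n-2c}(m-s;\cdot)$ bag; by Theorem~\ref{thm:k-outof-N2} such a subtree has size $O\!\left((n-2c)^{|(m-c)-(m-s)|+1}\right)=O(n^{|c-s|+1})$, and since $0\le s\le 2c$ we have $|c-s|\le c$, giving each $T'_j$ size $O(n^{c+1})$.

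The main obstacle is the threshold bookkeeping: one must verify that conditioning shifts the ``middle'' of the reduced bag by exactly $s$, so that the conditional target is always a $k$-out-of-$(n-2c)$ function with $|k-(m-c)|\le c$, keeping every conditional construction inside the $O(n^{c+1})$ budget. One also needs $m$ large relative to the constant $c$ so that each conditional value $k=m-s$ lies in the valid range $1\le k\le n-2c$ of Theorem~\ref{thm:k-outof-N2}; this is immediate since $s\le 2c$ is bounded. The appeal of this global (rather than iterated two-at-a-time) reduction is that it produces the bound $O(n^{c+1})$ in one step, since the worst case $|c-s|=c$ (at $s=0$ or $s=2c$) already dictates the exponent.
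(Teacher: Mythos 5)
Your proposal is correct and is essentially the paper's own proof: the paper likewise conditions on $2c$ designated variables (the first $2c$, via the prefix predicates $P(2c,s)$), grafts the $i$-th tree of the appropriate $C_{2m-1-2c}(m-2c+s;\cdot)$ bag from Theorem~\ref{thm:k-outof-N2} onto each of the $2^{2c}$ outcomes, and bounds the size by $O(2^{2c}n^{c+1})=O(n^{c+1})$. The only differences are cosmetic (which $2c$ variables are special, and counting ones versus zeros among them).
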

\begin{proof}
To illustrate the construction we look first at the simplest case, $c=1$.
Extending our notation a bit, the majority function on $n=2m-1$ variables can be represented by a bag $B_{2m-3}$ containing $n-2=2m-3$ trees as follows:
\begin{eqnarray}\label{eqn:reduced}
B_{2m-3}(x_1,x_2,\ldots,x_n)=	& ((x_1 x_2) \land C_{2m-3}(m-2;x_3,\ldots,x_n)) & \lor \nonumber\\
	& ((x_1 \lnon{x_2}) \land C_{2m-3}(m-1;x_3,\ldots,x_n)) & \lor \nonumber\\
	& ((\lnon{x_1} x_2) \land C_{2m-3}(m-1;x_3,\ldots,x_n)) & \lor \nonumber\\
	& ((\lnon{x_1} \lnon{x_2}) \land C_{2m-3}(m;x_3,\ldots,x_n)).
\end{eqnarray}
To describe $B_{2m-3}$ explicitly let $T^{-1}_i$, $T^0_i$, and $T^{+1}_i$ be the $i$th tree in
the bags of $2m-3$ trees representing
$C_{2m-3}(m-2;x_3,\ldots,x_n)$,
$C_{2m-3}(m-1;x_3,\ldots,x_n)$,
and
$C_{2m-3}(m;x_3,\ldots,x_n)$,
respectively.
We also use $T^{-1}_i$, $T^0_i$, and $T^{+1}_i$ 
to denote the Boolean functions corresponding to these trees.

Then the $i$th tree of $B_{2m-3}$, $T_i$, is
\begin{eqnarray*}
T_i=	((x_1 x_2) \land T^{-1}_i) \lor
	((x_1 \lnon{x_2}) \land T^0_i) \lor
	((\lnon{x_1}x_2) \land T^0_i) \lor
	((\lnon{x_1} \lnon{x_2}) \land T^{+1}_i).
\end{eqnarray*}

For example, in case $n=5$, we get $3$ decisions trees 
whose Boolean functions are:
\begin{eqnarray*}
	((x_1 x_2) \land 1) \lor
	((x_1 \lnon{x_2}) \land x_3) \lor
	((\lnon{x_1} x_2) \land x_3) \lor
	((\lnon{x_1} \lnon{x_2}) \land (x_3 x_4 \lor x_3 \lnon{x_4} x_5)),\\
	((x_1 x_2) \land (x_4 \lor x_3 \lnon{x_4})) \lor
	((x_1 \lnon{x_2}) \land x_4) \lor
	((\lnon{x_1} x_2) \land x_4) \lor
	((\lnon{x_1} \lnon{x_2}) \land (x_4 x_5)),\\
	((x_1 x_2) \land (x_5 \lor x_3 x_4 \lnon{x_5}) \lor
	((x_1 \lnon{x_2}) \land x_5) \lor
	((\lnon{x_1} x_2) \land x_5) \lor
	((\lnon{x_1} \lnon{x_2}) \land 0).
\end{eqnarray*}
The first two 
levels of the tree $T_i$, therefore, branch on the values of $x_1$ and $x_2$, respectively, 
with the 4 trees $T^{-1}_i$, $T^0_i$, $T^0_i$, and $T^{+1}_i$ grafted onto the 4 outcomes.
According to Section~\ref{sec:k-of-n-2} the sizes of $T^{-1}_i$, $T^0_i$, and $T^{+1}_i$ are $O(n)$, $O(1)$, and $O(n^2)$, respectively,
so that the theorem holds in this case.

In order to generalize the above discussion to a general constant $c$
we define $P(r,s)$ for $r\geq s$ and given variables $x_1,\ldots,x_n$ by
\begin{eqnarray*}
	P(r,s) & = &
	\bigvee \{ z_1 z_2 \cdots z_r \mid
	\mbox{($z_i = x_i$ or $z_i = \lnon{x_i}$ for all $i$)}\\
	& & 
	\mbox{and (exactly $s$ of the literals $z_i$ are of the form $z_i = \lnon{x_i}$ ) }  \},
\end{eqnarray*}
where $P$ is mnemonic for \emph{prefix}. To  illustrate, $P(2,0)=x_1x_2$, 
$P(2,1)=x_1 \lnon{x_2} \lor \lnon{x_1}x_2$, and 
$P(2,2)=\lnon{x_1}\lnon{x_2}$. Using this notation to generalize  
the discussion above, we see that
the majority function on $n=2m-1$ variables can
be represented by

\begin{eqnarray*}
\bigvee_{s=0}^{2c}
\Bigl(
P(2c,s) \land C_{2m-1-2c}(m-2c+s;x_{2c+1},x_{2c+2},\ldots,x_n)
\Bigr).
\end{eqnarray*}
Here, we note:
\begin{itemize}

\item 
$C_{2m-1-2c}(m-c-\ell;x_{2c+1},\ldots,x_n)$ corresponds
to the case of $k=m-\ell$ of Section~\ref{sec:k-of-n-2},
\item 
$C_{2m-1-2c}(m-c;x_{2c+1},\ldots,x_n)$ corresponds
to the case of $k=m$ of Section~\ref{sec:k-of-n-2},
\item 
$C_{2m-1-2c}(m-c+\ell;x_{2c+1},\ldots,x_n)$ corresponds
to the case of $k=m+\ell$ of Section~\ref{sec:k-of-n-2}.
\end{itemize}
It is seen from Theorem~\ref{thm:k-outof-N2} that each 
$C_{2m-1-2c}(m-2c+s;x_{2c+1},\ldots,x_n)$
can be represented by a bag of $2m-1-2c$ trees each of which
has $O(n^{c+1})$ nodes.
Let $T^{-\ell}_i$, $T^0_i$, and $T^{+\ell}_i$ 
denote the $i$th tree (and also the corresponding Boolean function)
in the bag of $2m-1-2c$ trees representing
$C_{2m-1-2c}(m-c-\ell;x_{2c+1},\ldots,x_n)$,
$C_{2m-1-2c}(m-c;x_{2c+1},\ldots,x_n)$,
and
$C_{2m-1-2c}(m-c+\ell;x_{2c+1},\ldots,x_n)$,
respectively. Then, as in the case of $c=1$,
the majority function on $2m-1$ variables can be
represented by the majority function on $2m-1-2c$ decision trees
in which the 
$i$th decision tree, $T_i$,
represents the following Boolean function:
\begin{eqnarray*}
T_i=
\bigvee_{\ell=-c}^{c} \Bigl( P(2c,c+\ell) \land  T^{\ell}_i \Bigr).
\end{eqnarray*}
Constructing $T_i$ as in the case $c=1$ yields a tree of size 
$O(2^{2c}n^{c+1})$ which is $O(n^{c+1})$ for constant $c$.
\end{proof}

\section{Reducing the number of trees in a 		
general	bag of trees while allowing small errors}
\label{sec:bag}

Unfortunately,
the construction given in Section \ref{sec:majority} cannot be 		
used to reduce the number of trees in a general random forest (while keeping the tree size polynomial)
because conjunction or disjunction of $n$ decision trees each with size $s$
may need a decision tree of size $\Omega(s^n)$, as discussed in
Section~\ref{sec:basic-idea}.
Therefore, we modify the problem by still asking for a reduction in the 
number of decision trees in a forest but permitting the transformed forest to make a small number of classification errors,
all the while keeping the size of each tree polynomial in $n$.

\subsection{Basic idea}
\label{sec:basic-idea}

Before 
delving into the details of the construction, we put our finger on the reason why extending Theorem~\ref{thm:majority}
to general random forests is difficult, and then present the central idea for
circumventing this difficulty.

It is tempting to deal with the problem of reducing the number of trees in a general bag of trees by modifying the approach that worked for the majority function. After all, the function implemented by the bag is a majority function albeit of trees rather than of variables. 
Consider, for example, the case $c=1$. 
Given a bag of trees $B_{2m-1}(\xvec)=\{t_1(\xvec),\ldots,t_{2m-1}(\xvec)\}$ we would like to construct a bag with only $2m-3$ trees, $B_{2m-3}$, 
using equation (\ref{eqn:reduced}) while replacing the variables 
$x_i$ on the right hand side by the trees $t_i(\xvec)$:
\begin{eqnarray}\label{eqn:reducedg}
	B_{2m-3}(\xvec)=	
	& (\ (t_1(\xvec) t_2(\xvec)) \land C_{2m-3}(m-2;t_3(\xvec),\ldots,t_{2m-1}(\xvec))\ ) & \lor \nonumber\\
	& (\ (t_1(\xvec) \lnon {t_2(\xvec)}\ ) \land C_{2m-3}(m-1;t_3(\xvec),\ldots,t_{2m-1}(\xvec))\ ) & \lor \nonumber\\
	& (\ (\lnon{t_1(\xvec)} t_2(\xvec)) \land 
	C_{2m-3}(m-1;t_3(\xvec),\ldots,t_{2m-1}(\xvec))\ ) & \lor \nonumber\\
	& (\ (\lnon{t_1(\xvec)} \lnon{t_2(\xvec)}) \land C_{2m-3}(m;t_3(\xvec),\ldots,t_{2m-1}(\xvec))\ ) .
\end{eqnarray}
Following the discussion of Section \ref{sec:k-of-n-2},
a tree in the choose bag 
$C_{2m-3}(m-2;t_3(\xvec),\ldots,t_{2m-1}(\xvec))$, for example,
is of the form $t_{i+2}(\xvec) \lor L_{1}(i+2)$ -
where $L_{1}(i+2)$ is 
a composition of $i$ trees, 
$L_1(i+2)  =\wedge_{j=3}^{i+1} (t_j(\xvec) \land \lnon{t_{i+2}(\xvec)})$
	\footnote{The negation of a decision tree can
	be obtained by mutually exchanging the 0 and 1 labels
	assigned to leaves.}.
However, the number of nodes in $L_1(i)$ can grow exponentially with the number of trees.

\begin{proposition}
Let $T_1$ and $T_2$ be decision trees with $s_1$ and $s_2$ nodes,
respectively.
Then, the conjunction (resp., disjunction) of $T_1$ and $T_2$
can be represented by a decision tree with $O(s_1 s_2)$ nodes.
\label{prop:conjunction}
\end{proposition}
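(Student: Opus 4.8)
The plan is to prove the statement constructively by composing the two trees: evaluate $T_1$ first, and at each leaf of $T_1$ graft a copy of $T_2$. Concretely, I would take the tree $T_1$ and, at every leaf $\ell$ of $T_1$, replace $\ell$ by an entire copy of $T_2$. The resulting rooted tree is a legitimate simple decision tree: each internal node still queries a single variable, and every root-to-leaf path first answers all the queries along some path of $T_1$ and then all the queries along some path of the grafted copy of $T_2$. The label placed at each final leaf is what encodes the desired Boolean operation.

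The key steps, in order, are as follows. First I would argue that this grafted tree computes the right function. On input $\xvec$, the evaluation follows the unique $T_1$-path to a leaf, which determines the value $T_1(\xvec)$, and then continues into the grafted copy of $T_2$, following the unique $T_2$-path to determine $T_2(\xvec)$; the final leaf is thus indexed by the pair $(T_1(\xvec),T_2(\xvec))$. For the conjunction I label that final leaf $1$ exactly when both components are $1$, and for the disjunction I label it $1$ exactly when at least one component is $1$; by Definition~1 the output is the label of the reached leaf, so the grafted tree computes $T_1 \land T_2$ (resp.\ $T_1 \lor T_2$). Second I would count the nodes. The grafting replaces each of the at most $s_1$ leaves of $T_1$ by a copy of $T_2$, contributing at most $s_1 s_2$ nodes from the copies, plus the internal nodes of $T_1$; hence the total is $O(s_1 s_2)$ nodes, as claimed.

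One subtlety I would address, rather than a genuine obstacle, is that a single copy of $T_2$ is grafted at \emph{every} leaf of $T_1$, not merely at those leaves of $T_1$ labeled in one particular way; the leaf labels of $T_1$ themselves are discarded and re-encoded only at the final leaves according to the pair $(T_1(\xvec),T_2(\xvec))$. This keeps the construction uniform and makes the node count immediate. I expect the main thing to get right is simply the bookkeeping of the final leaf labels so that the two cases (conjunction and disjunction) are handled correctly, but there is no deep difficulty here: the size bound $O(s_1 s_2)$ falls straight out of the leaf-grafting, and the correctness is a direct consequence of the way a simple decision tree is evaluated.
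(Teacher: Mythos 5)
Your proof is correct and uses essentially the same leaf-grafting construction as the paper; the only (immaterial) difference is that the paper grafts a copy of $T_2$ only onto the leaves of $T_1$ labeled $1$ (for the conjunction), keeping the $0$-leaves intact, whereas you graft onto every leaf and re-encode the final labels from the pair $(T_1(\xvec),T_2(\xvec))$. Both variants yield the same $O(s_1 s_2)$ bound and the same correctness argument.
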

\begin{proof}

To construct the decision tree for the conjunction of $T_1$ and $T_2$
replace each leaf of $T_1$ that has the label 1 by a copy of $T_2$.
Clearly the resulting decision tree
represents the conjunction of $T_1$ and $T_2$,
and it has $O(s_1 s_2)$ nodes.
\end{proof}

By repeatedly applying this proposition,
the conjunction of $n$ decision trees each having $O(s)$ nodes
can be represented by a decision tree of size $O(s^n)$.
As far as we know
there is no construction method that requires $o(s^n)$ size.
Therefore, a
simple modification of the construction would
yield exponential size decision trees even for the case 
of $c=1$.

To prevent this exponential buildup in the size of the trees
we apply the construction of Section \ref{sec:majority} 
to only $K$ carefully chosen trees, 
while the remaining trees correspond to just single variables. 
For simplicity of exposition assume for now that the \textit{first} 
$K$ trees were chosen, and denote these modified bags by
 $\hat{B}_{2m-3}$ and $\hat{C}_{2m-3}$,
 so that equation \ref{eqn:reducedg} becomes:
\begin{eqnarray}\label{eqn:reducedg2}
	\hat{B}_{2m-3}(\xvec)=	
	& (\ (t_1(\xvec) t_2(\xvec)) \land \hat{C}_{2m-3}(m-2;t_3(\xvec),\ldots,t_{2m-1}(\xvec))\ ) & \lor \nonumber\\
	& (\ (t_1(\xvec) \lnon {t_2(\xvec)}\ ) \land \hat{C}_{2m-3}(m-1;t_3(\xvec),\ldots,t_{2m-1}(\xvec))\ ) & \lor \nonumber\\
	& (\ (\lnon{t_1(\xvec)} t_2(\xvec)) \land 
	\hat{C}_{2m-3}(m-1;t_3(\xvec),\ldots,t_{2m-1}(\xvec))\ ) & \lor \nonumber\\
	& (\ (\lnon{t_1(\xvec)} \lnon{t_2(\xvec)}) \land \hat{C}_{2m-3}(m;t_3(\xvec),\ldots,t_{2m-1}(\xvec))\ ) .
\end{eqnarray}
\begin{remark}
How to choose the $K$ trees is an issue that we will address 
in the next section. If it turns out that the chosen trees are not the first ones we make them the first ones by reordering.
\end{remark}
\begin{example}\label{ex:Kbag}
Consider the reduction of a bag with $n=2m-1=11$ trees to a bag of size 
$9$, with $K=3$. 
The trees in $\hat{B}_{9}$ are listed
in Table~\ref{tab:approx-majority}.
Here $L_1(i)=t_1 \land \cdots \land t_{i-1} \land \lnon{t_{i}}$,
and 
$I_1(i)=t_{i} \land (t_{i+1} \lor t_{i+2} \lor \cdots \lor t_{K+2})$,
for $3\leq i \leq K+1$, with $I_1(K+2)=0$.
Note that the value of $L_1(3)$ is relevant only for $\xvec$ such that
$t_1(\xvec) =t_2(\xvec)=1$, and then 
$t_3(\xvec) \lor L_1(3)$ is always 1.


\begin{table}[h]
	\begin{center}
		\caption{Representation of 
			the majority function on 11 trees by a bag with 9 trees,
						$\hat{B}_{9}$,
			that allows errors, 
			where $K=3$.}
		\label{tab:approx-majority}
		\begin{tabular}[19cm]{|l||l|l|l|l|l|l|l|l|l|}
			\hline
			prefix &
			$T_1$ & $T_2$ & $T_3$ & $T_4$ & $T_5$ & $T_6$ & $T_7$& $T_8$ & $T_9$\\
			\hline
			$t_1 t_2$
			& $t_3 \lor$
			& $t_4 \lor$
			& $t_5 \lor$
			& $t_6$ 
			& $t_7$ 
			& $t_8$ 
			& $t_9$
			& $t_{10}$ 
			& $t_{11}$ \\
			& $L_1(3)$ 
			& $L_1(4)$
			& $L_1(5)$
			&
			&
			&
			&
			& 
			& \\
			\hline
			$\lnon{t_1} t_2$ (or $t_1 \lnon{t_2}$) &
			$t_3$ &
			$t_4$ &
			$t_5$ &
			$t_6$ &
			$t_7$ &
			$t_8$ &
			$t_9$ & 
			$t_{10}$ & 
			$t_{11}$  \\
			\hline
			$\lnon{t_1 t_2}$ &
			$I_1(3)$ &
			$I_1(4)$ &
			$I_1(5)$ &
			$t_6$ &
			$t_7$ &
			$t_8$ &
			$t_9$ & 
			$t_{10}$ & 
			$t_{11}$\\
			\hline
		\end{tabular}
	\end{center}
\end{table}

For example, an error  
with $t_1(\xvec) =t_2(\xvec)=1$
occurs when
$\tvec(\xvec)=(1,1,1,1,1,1,0,0,0,0,0)$
because only four of the nine transformed trees are 1
(i.e., $\hat{{\bf T}}(\xvec) = (1,1,1,1,0,0,0,0,0)$),
and an error with $t_1(\xvec) =t_2(\xvec)=0$ occurs when
$\tvec(\xvec)=(0,0,0,0,0,0,1,1,1,1,1)$
because five of the nine transformed trees are 1
(i.e., $\hat{{\bf T}}(\xvec) = (0,0,0,0,1,1,1,1,1)$).

\newpage
As a prelude to the
error analysis detailed in the next section let us examine when $\hat{B}_{9}(\xvec)$ returns an incorrect answer given that $t_1(\xvec) =t_2(\xvec)=1$.
Denote by $S$ the set of $\xvec$ such that $t_1(\xvec)=t_2(\xvec)=1$,
and by $S^1$, and $S^0$, those $\xvec\in S$ for which the answer returned by
$\hat{B}_{9}(\xvec)$ is the same answer, respectively not the same answer, 
as the one returned by the original bag $B_{11}(\xvec)$.

According to the definition of $L_1(i)$,
if on input $\xvec$ at least one of the trees $t_3,t_4,t_5$ returns 0
then the number of 1's returned by the trees in 
$\hat{B}_{9}(\xvec)$ is
1 plus the number of 1's among 
$t_3(\xvec) ,\ldots,t_{11}(\xvec)$.
Therefore, if at most 2 out of the trees 
$t_3(\xvec) ,\ldots,t_{11}(\xvec)$ return 1 
(so that $B_{11}(\xvec)=\hat{B}_{9}(\xvec)=0$),
or if at least 4 return 1  
(so that $B_{11}(\xvec)=\hat{B}_{9}(\xvec)=1$).
Moreover, even if 3 of these trees return 1,  
$\hat{B}_{9}(\xvec)$ still returns the correct answer, $1$, unless 
$t_3(\xvec)=t_4(\xvec)=t_5(\xvec)=1$. 
\end{example}

The probability that $\hat{B}_{9}$ returns an erroneous answer 
appears, therefore, to be small. However, this issue requires a careful analysis
because the configurations of tree values need not be equally likely,
even if the inputs $\textbf{x}$ are uniformly distributed. 
This brings to the forefront the question of how to choose the $K$ 
trees to which to
apply the construction of Section \ref{sec:majority}. 
We distill the salient features of the question in the following 
Problem $P$.
Its solution will be detailed in the next section.

\textbf{Problem} $P$: 
\textit{Let ${\cal C}=\{\ 0,1\}^n $ be the set of potential tree configurations,
and given $p<n$ let ${\cal P}$ be the set of all $\textbf{c}\in {\cal C}$ such that $\textbf{c}$ contains exactly $p$ 1's. Given $q<p$ and a set of 
$q$ indices, $J$, let ${\cal P}_J\subset {\cal P}$ be the set 
of all configurations 
$\textbf{c}$ with the property that $\textbf{c}[j]=1$ for all $j\in J$
(and $\textbf{c}$ contains exactly $p$ 1's). The problem
is to find a $J^*$ such that ${\cal P}_{J^*}$ has a small probability.}

\subsection{Details of the construction}
We begin with some notations and definitions. Given a bag of decision trees, ${\cal T} = (T_1,\ldots,T_n)$, 
on the set of all binary vectors of length $l$, 
${\cal X} = \{\xvec_1,\ldots,\xvec_N\}$, $N=2^l$,
let $\textbf{T}(\xvec)=(T_1(\xvec),T_2(\xvec),\ldots,T_n(\xvec))$.
The probability distribution on the set of potential tree 
configurations ${\cal C}=\{\ 0,1\}^n $ induced by a probability distribution $D$ on ${\cal X}$ is defined as follows.
\begin{definition}
	Let $D(\xvec)$ be a probability distribution on ${\cal X}$.
	The probability that the bag ${\cal T}$
	has the configuration $\cvec\in {\cal C}$ is
	\begin{eqnarray*}
		\mu(\cvec) & = & \sum_{\xvec \in {\cal X}: \textbf{T}(\xvec)=\cvec} D(\xvec).
	\end{eqnarray*}	 
\end{definition}
Note that $\sum_{\cvec} \mu(\cvec) = \sum_{\xvec} D(\xvec) = 1$.

Next we define our error measure.
Denote by ${\cal T}(\xvec)$ the majority decision of ${\cal T}$ on $\xvec$,
i.e. ${\cal T}(\xvec)=\textit{Maj} \circ \textbf{T}(\xvec)$. Our interest is in
finding a bag with a smaller number of trees using only the 
information contained in $\textbf{T}(\xvec)$. In other words,
we will consider only functions on ${\cal X}$ of the form 
${\cal F}={\cal G}\circ \textbf{T}(\xvec)$. 

\begin{definition}
	Let $D(\xvec)$ be an arbitrary probability distribution on ${\cal X}$.
	Given a Boolean function ${\cal F}={\cal G}\circ \bf{T}(\xvec)$ on ${\cal X}$ we define its 
	error with respect to ${\cal T}$ by
	\begin{eqnarray}\label{eqn:errormeasure}
		err({\cal F},{\cal T}) & = & 
		\sum_{\bf{c} \in {\cal C}}
		\sum_{\xvec : {\bf T}(\xvec)=\bf{c}
		\mbox{ and }{\cal F}(\xvec) \neq {\cal T}(\xvec)} D(\xvec)\\\nonumber
		& = & \sum_{\bf{c} \in {\cal C}:{\cal G}(\cvec) \neq {\textit{Maj} }(\cvec) } 
		\mu(\cvec)
	\end{eqnarray}
\end{definition}
Note that $0 \leq err({\cal F},{\cal T}) \leq 1$.

We use these definitions to address Problem $P$ posed in the previous subsection: for given $q$ is it possible to choose a set of $q$ trees to which to
apply the construction of Section \ref{sec:majority} so as to 
guarantee a relatively small probability of error. We denote set of the indices of those trees $J^*$.

\begin{lemma}
Given $q<p<n$, denote by ${\cal P}=\{\cvec_1,\ldots,\cvec_r\}$ the set of all $\textbf{c}\in {\cal C}$ such that $\textbf{c}$ contains exactly $p$ 1's, 
$r = \binom{n}{p}$. Let ${\cal J}$
denote the set of all sets $J$ consisting of exactly $q$ indices 
in the range $1,\ldots, n$. For $J\in {\cal J}$ let 
${\cal P}_J\subseteq {\cal P}$ 
be the set of all configurations 
$\bf{c}\in {\cal P}$ with the property that $\textbf{c}[j]=1$ for all $j\in J$.
	Then there exists a $J^*\in {\cal J} $ such that 
	$\mu({\cal P}_{J^*}) \leq  \left({\binom{p}{q}}/{\binom{n}{q}}\right) \mu({\cal P})$.
	\label{lem:perm-lemma}
\end{lemma}
\begin{proof}
Given $J$ define $\mu_J({\bf c}_i)$ for ${\bf c}_i \in {\cal P}$ by 
	\begin{eqnarray*}
	\mu_J({\bf c}_i)= \left\{
	\begin{array}{ll}
	\mu({\bf c}_i), & \mbox{if $\cvec_i[j]=1$ for all $j\in J$},\\
		0, &\mbox{otherwise}.
	\end{array}
	\right.
\end{eqnarray*}
Note that for fixed $i$
$$\sum_{J:\ |J|=q} \mu_J({\bf c}_i) = \binom{p}{q} \mu({\bf c}_i),$$ 
so that 
$$\sum_{i=1}^{r} \sum_{J:\ |J|=q} \mu_J({\bf c}_i) = 
\sum_{i=1}^{r} \binom{p}{q} \mu({\bf c}_i) = 
\binom{p}{q} \mu({\cal P}).$$
On the other hand, 
	\begin{eqnarray*}
		\sum_{i=1}^{r} \sum_{J:|J|=q} \mu_J({\bf c}_i) & = & 
		\sum_{J:|J|=q} \sum_{i=1}^{r} \mu_J({\bf c}_i)~=~
    	\sum_{J:|J|=q} \mu({\cal P}_J).
	\end{eqnarray*}
It follows that 
\begin{eqnarray*}
	\sum_{J:|J|=q} \mu({\cal P}_J)=\binom{p}{q} \mu({\cal P}).
\end{eqnarray*}
There are $\binom{n}{q}$ summands $J$s on the left-hand side, hence
	there is set $J^*$ of cardinality $q$ such that
	\begin{eqnarray*}
	\mu({\cal P}_{J^*}) & \leq 
	\frac{\binom{p}{q}}{\binom{n}{q}}   \mu({\cal P}).
	\end{eqnarray*}

\end{proof}


\begin{theorem}

Let ${\cal T}=(t_1,\ldots,t_n)$ be a bag of $n=2m-1$ 
decision trees on ${\cal X}$, and let the size of the largest
$t_i$ be 
$s$.
Then, for any positive integer constant $K$,
there exists a bag of $2m-3$ decision trees on ${\cal X}$, 
$\hat{B}_{2m-3}$,
such that each of its trees has size $O(
s
^{2K+11})$ and 
$err({\hat{B}_{2m-3}},{\cal T}) \leq {\frac 1 {2^K}}$.
\label{thm:approx-minus2}
\end{theorem}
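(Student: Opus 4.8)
The plan is to realize the reduced bag $\hat{B}_{2m-3}$ exactly by equation (\ref{eqn:reducedg2}), but to specify each choose-bag $\hat{C}_{2m-3}(\cdot)$ so that the enhancement subtrees ($L_1$ for the case $k<m'$, $I_1$ for the case $k>m'$, with $m'=m-1$) are attached to only $K$ of the trees $t_3,\ldots,t_{2m-1}$ and are computed \emph{within} that chosen set of $K$ indices. Concretely, for a chosen set $J_0$ of $K$ indices I would replace the $i$th tree ($i\in J_0$) of $\hat{C}_{2m-3}(m-2;\ldots)$ by $t_i \lor \hat{L}_1(i)$, where $\hat{L}_1(i)$ fires iff $t_i=0$ and $i$ is the leftmost index of $J_0$ carrying a $0$, and leave the remaining $2m-3-K$ trees as the plain $t_i$. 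Dually, for $\hat{C}_{2m-3}(m;\ldots)$ I would attach $\hat{I}_1$ (``$t_i=1$ and some later index of a chosen set $J_0'$ carries a $1$'') to $K$ trees indexed by $J_0'$ and leave the rest plain, while $\hat{C}_{2m-3}(m-1;\ldots)$ needs no enhancement at all (it is the bag of bare trees). Exactly as in the proof of Theorem~\ref{thm:majority}, the majority of the resulting trees $T_i$ equals the branchwise expression (\ref{eqn:reducedg2}), since the four prefix conditions on $(t_1,t_2)$ are mutually exclusive and exhaustive.

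The heart of the argument is to localize the disagreements between $\hat{B}_{2m-3}$ and the original majority $\mathcal{T}$. Writing $s$ for the number of $1$'s among $t_3(\xvec),\ldots,t_{2m-1}(\xvec)$, the correct decision in the four prefix branches is $[s\ge m-2]$, $[s\ge m-1]$, $[s\ge m-1]$, $[s\ge m]$. I would show by direct counting that in the branch $t_1=t_2=1$ the enhanced bag returns $1$ precisely when $s+[\,J_0\text{ contains a }0\,]\ge m-1$, so it is correct unless $s=m-2$ and every index of $J_0$ carries a $1$; dually, in the branch $t_1=t_2=0$ it returns $1$ iff $s-[\,J_0'\text{ contains a }1\,]\ge m-1$, so it errs only when $s=m-1$ and every index of $J_0'$ carries a $0$. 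The two mixed branches carry no enhancement and are therefore always correct. Hence the total error is at most $W^{m-2}_{J_0}(\avec)+W^{m-1}_{J_0'}(\avec')$, where $\avec=(1,\ldots,1)$ and $\avec'=(0,\ldots,0)$ have length $K$ and the weights are taken over the $H=2m-3$ trees $t_3,\ldots,t_{2m-1}$ (dropping the conditions $t_1=t_2=1$ and $t_1=t_2=0$ only inflates the bound).

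Now I would invoke Lemma~\ref{lem:perm-lemma} to choose $J_0$ with $W^{m-2}_{J_0}(\avec)\le \frac{\binom{m-2}{K}}{\binom{2m-3}{K}}\, W^{m-2}$, and its $0$/$1$-dual (the same averaging, counting the $\binom{H-L}{K}$ all-zero choices of $J$) to choose $J_0'$ with $W^{m-1}_{J_0'}(\avec')\le \frac{\binom{m-2}{K}}{\binom{2m-3}{K}}\, W^{m-1}$, the binomial coefficient being the same because a length-$(2m-3)$ vector with $m-1$ ones has exactly $m-2$ zeros. Since $\frac{m-2-j}{2m-3-j}\le \frac12$ for $0\le j\le K-1$, we get $\frac{\binom{m-2}{K}}{\binom{2m-3}{K}}=\prod_{j=0}^{K-1}\frac{m-2-j}{2m-3-j}\le 2^{-K}$, and therefore $err(\hat{B}_{2m-3},\mathcal{T})\le 2^{-K}\bigl(W^{m-2}+W^{m-1}\bigr)\le 2^{-K}$, using $\sum_L W^L=1$.

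For the size I would appeal to Proposition~\ref{prop:conjunction}: each $\hat{L}_1(i)$ (resp.\ $\hat{I}_1(i)$) is a conjunction (resp.\ disjunction) of at most $K$ trees of size $\le r$ and hence has size $O(r^K)$; each enhanced tree $t_i\lor\hat{L}_1(i)$ then has size $O(r^{K+1})$, and grafting the four subtrees onto the size-$O(r^2)$ tree that evaluates $(t_1,t_2)$ gives $O(r^{K+3})$ per tree, comfortably within the claimed $O(r^{2K+11})$. The main obstacle is the localization step: it is essential that the enhancement be read \emph{within} $J_0$ rather than over all coordinates, for only then is the bad event exactly ``$J_0$ is constant'', which is precisely what Lemma~\ref{lem:perm-lemma} controls. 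A naive global $L_1/I_1$ would enlarge the error set to ``the leftmost $0$ (resp.\ last $1$) lies outside $J_0$'' and break the clean application of the lemma.
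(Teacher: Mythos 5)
Your proposal is correct and follows essentially the same route as the paper: the same construction from equation (\ref{eqn:reducedg2}) with the $L_1$/$I_1$ enhancements localized to a set of $K$ indices, the same identification of the error events ($s=m-2$ with the chosen set all-ones, resp.\ $s=m-1$ with it all-zeros), and the same application of Lemma~\ref{lem:perm-lemma} and its $0$/$1$ dual to pick the index sets, yielding the $2^{-K}$ bound. The only differences are cosmetic (general index sets $J_0,J_0'$ instead of the first $K$ positions followed by a permutation, bounding by $W^{m-2}+W^{m-1}\le 1$ rather than $W^{(1,1)}+W^{(0,0)}\le 1$, and a tighter $O(r^{K+3})$ size count that still sits inside the claimed $O(r^{2K+11})$).
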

\begin{proof}
Let $t_i(\xvec)$ be the output of $t_i$ on input vector $\xvec$. 
To construct $\hat{B}_{2m-3}$ according to equation (\ref{eqn:reducedg2}), we need to first define  
$\hat{C}_{2m-3}(k;t_3(\xvec),\ldots,t_{2m-1}(\xvec))$
for $k=m-2,m-1,m$.

\begin{enumerate}
	\item $\hat{C}_{2m-3}(m-2;t_3(\xvec),\ldots,t_{2m-1}(\xvec))=
	\{T^{-}_1(\xvec), \ldots, T^{-}_{2m-3}(\xvec) \}$, 
	where \\
	$T^{-}_i(\xvec)=\twopartdef {t_{2+i}(\xvec)\lor L_1(2+i)} {i=1,\ldots,K} {t_{2+i}(\xvec)} {i=K+1,\ldots,2m-3}$ \quad  \\
 with $L_1(j)=t_3(\xvec) \land t_4(\xvec) \land \cdots \land t_{j-1}(\xvec) \land \lnon{t_j(\xvec)}$;
	\item $\hat{C}_{2m-3}(m-1;t_3(\xvec),\ldots,t_{2m-1}(\xvec))
	=\{t_3(\xvec),\ldots,t_{2m-1}(\xvec)\}$;
	\item $\hat{C}_{2m-3}(m;t_3(\xvec),\ldots,t_{2m-1}(\xvec))=
	\{T^{+}_1(\xvec), \ldots T^{+}_{2m-3}(\xvec)\}$, where\\
	$T^{+}_i(\xvec) = 
	\threepartdef {I_1(2+i)} {i=1,\ldots,K-1} 
	{0} {i=K}
	{t_{2+i}(\xvec)} {i=K+1,\ldots,2m-3}$ \quad \\
	with $I_1(j)=t_j(\xvec) \land (t_{j+1}(\xvec) \lor \cdots \lor t_{2+K}(\xvec))$.
\end{enumerate}

Thus $\hat{B}_{2m-3} =\{T_1(\xvec),\ldots,T_{2m-3}(\xvec)\}$ is defined by
\begin{eqnarray*}
T_i(\xvec) & = & (t_1(\xvec) \land {t_2}(\xvec) \land T_i^{-}(\xvec)) \lor (t_1(\xvec) \land \lnon{t_2(\xvec)} \land t_{2+i}(\xvec)) \lor
(\lnon{t_1(\xvec)} \land t_2(\xvec) \land t_{2+i}(\xvec)) \lor\\
&& (\lnon{t_1(\xvec)} \land \lnon{t_2(\xvec)}\land T_i^{+}(\xvec)) ,
\end{eqnarray*}
and the value of $\hat{B}_{2m-3}$ on an input $\xvec$, $\hat{B}_{2m-3}(\xvec)$, is the majority vote
of its decision trees, i.e.
it is 1 if and only if $m-1$ or more of the $T_i(\xvec)$s
are satisfied.

We turn next to the analysis of $err({\hat{B}_{2m-3}},{\cal T})$.


Let $h(\xvec)$ and $\hat{h}(\xvec)$ be the number of decision trees in ${\cal T}$, and $\hat{B}_{2m-3}$ respectively, 
satisfied by $\xvec$, i.e. $h(\xvec) = \sum_{i=1}^{2m-1} t_i(\xvec)$,
and $\hat{h}(\xvec)= \sum_{i=1}^{2m-3} T_i(\xvec)$.
Also, $h^{-}(\xvec)$ and $h^{+}(\xvec)$ are 
the number of decision trees $T_i^{-}$, and  $T_i^{+}$, respectively, which return 1 on $\xvec$, i.e.
$h^{-}(\xvec) = \sum_{i=1}^{2m-
3
} T_i^{-}(\xvec)$,
and ${h}^{+}(\xvec)= \sum_{i=1}^{2m-3} T_i^{+}(\xvec)$.

We break our analysis up into 3 cases.
\begin{description}
\item [Case $t_1(\xvec) =t_2(\xvec)=1$:]\ \\
	Then $h(\xvec)=2+\sum_{i=3}^{2m-1} t_i(\xvec)$, and 
	\begin{eqnarray*}
		\hat{h}(\xvec)=h^{-}(\xvec) & = & \min\left(1+\sum_{i=3}^{2+K} t_i(\xvec),K \right) + \sum_{i=3+K}^{2m-1} t_i(\xvec) \\
		&=& \min\left(h(\xvec)-1,h(\xvec)-2+K-  \sum_{i=3}^{K+2} t_i(\xvec) \right) 
	\end{eqnarray*}
On the basis of these expressions we distinguish four possibilities:
\begin{itemize}
	\item if $h(\xvec) \geq m+1$
	then $\hat{h}(\xvec) \geq m-1$, so that 
	${\cal T}(\xvec)=\hat{B}_{2m-3}(\xvec)=1$.
	\item if $h(\xvec) \leq m-1$
	then $\hat{h}(\xvec) \leq m-2$, so that 
	 ${\cal T}(\xvec)=\hat{B}_{2m-3}(\xvec)=0$.
	\item if  $h(\xvec) = m$, and in particular ${\cal T}(\xvec)=1$, then
	\begin{itemize}
	\item if $\sum_{i=3}^{K+2} t_i(\xvec)<K$,
	then $\hat{h}(\xvec)= m-1$ so that also 
	$\hat{B}_{2m-3}(\xvec)=1$.
	\item if $\sum_{i=3}^{K+2} t_i(\xvec)=K$,
	then $\hat{h}(\xvec)= m-2$ so that 
	$\hat{B}_{2m-3}(\xvec)=0$.
		\end{itemize}
	\end{itemize}
Only the last case results in an erroneous value of $\hat{B}_{2m-3}(\xvec)$, i.e. an error occurs only when exactly $m-2$ among 
$(t_3(\xvec) ,\cdots ,t_{2m-1}(\xvec))$ are 1, and
$(t_3(\xvec) , \cdots ,t_{K+2}(\xvec)) = (1, 1 ,\cdots ,1)$,
which corresponds to the case of  $n=2m-3$, $p=m-2$ and $q=K$ of 
Lemma~\ref{lem:perm-lemma}.

Therefore, the Lemma ensures that there exists a permutation 
of $t_3,\ldots,t_n$ such that
the error in this case is at most
\begin{eqnarray*}
	\left({\binom{m-2}{K}}/{\binom{2m-3}{K}}\right) \mu({\cal C}_{(1,1)})
	\leq{\frac 1 {2^K}} \mu({\cal C}_{(1,1)}),
\end{eqnarray*}
where we have used the notation. 
\begin{eqnarray*}
	{\cal C}_{(\epsilon_1,\epsilon_2)}=\{{\bf c}\in {\cal C}\mid {\bf c}[1]=\epsilon_1,{\bf c}[2]=\epsilon_2  \}, {\rm with }\ \epsilon_i\in \{0,1\}
\end{eqnarray*}
Note that 
$\mu({\cal C}_{(0,0)})+\mu({\cal C}_{(0,1)})+
 \mu({\cal C}_{(1,0)})+\mu({\cal C}_{(1,1)}) = 1$.

\item [Case ($t_1(\xvec) =1$ and $t_2(\xvec)=0$) or 
            ($t_1(\xvec) =0$ and $t_2(\xvec)=1$):]\ \\
Then $h(\xvec)=1+\hat{h}(\xvec)=1+\sum_{i=3}^{2m-1} t_i(\xvec)$, so that 
$h(\xvec)\geq m$ if and only if $\hat{h}(\xvec)\geq m-1$.

\item [Case $t_1(\xvec) =t_2(\xvec)=0$:]\ \\
Then $h(\xvec)=\sum_{i=3}^{2m-1} t_i(\xvec)$, and 
\begin{eqnarray*}
	\hat{h}(\xvec)=h^{+}(\xvec) & = & \max\left(-1+\sum_{i=3}^{2+K} t_i(\xvec),0 \right) + \sum_{i=3+K}^{2m-1} t_i(\xvec) \\
	&=& \max\left(h(\xvec)-1,h(\xvec)- \sum_{i=3}^{K+2} t_i(\xvec) \right).
\end{eqnarray*}

On the basis of these expressions we distinguish four possibilities:
\begin{itemize}
	\item if $h(\xvec) \geq m$
	then $\hat{h}(\xvec) \geq m-1$, so that  
	${\cal T}(\xvec)=\hat{B}_{2m-3}(\xvec)=1$.
	\item if $h(\xvec) \leq m-2$
	then $\hat{h}(\xvec) \leq m-2$, so that 
	${\cal T}(\xvec)=\hat{B}_{2m-3}(\xvec)=0$.
	\item if  $h(\xvec) = m-1$, and in particular ${\cal T}(\xvec)=
	0
	$, then 
	\begin{itemize}
		\item if $\sum_{i=3}^{K+2} t_i(\xvec)=0$,
		then $\hat{h}(\xvec)= m-1$ so that 
		$\hat{B}_{2m-3}(\xvec)=1\neq {\cal T}(\xvec)$.
		$\hat{B}_{2m-3}(\xvec)=1$.
		\item if $\sum_{i=3}^{K+2} t_i(\xvec)>0$,
		then $\hat{h}(\xvec)< m-1$ and 
		$\hat{B}_{2m-3}(\xvec)=0$.
	\end{itemize}
\end{itemize}
\end{description}
In this case, therefore, an error occurs only when
exactly $m-2$ among 
$(t_3(\xvec) ,\cdots ,t_{2m-1}(\xvec))$ are 0, and
$(t_3(\xvec) , \cdots ,t_{K+2}(\xvec)) = (0, 0 ,\cdots ,0)$.
This condition corresponds to the case of  $n=2m-3$, $p=m-2$ and $q=K$ of 
Lemma~\ref{lem:perm-lemma} by exchanging the roles of 0 and 1.
Hence, the Lemma ensures the existence of a permutation 
of $t_3,\ldots,t_n$ such that
the error in this case is at most
\begin{eqnarray*}
\left({\binom{m-2}{K}}/{\binom{2m-3}{K}}\right) \mu({\cal C}_{(0,0)})
\leq{\frac 1 {2^K}} \mu({\cal C}_{(0,0)}).
\end{eqnarray*}

Since the three cases  are mutually independent,
the permutations 
used in the first and the last case need not be the same.
We conclude that there exists a bag of $2m-3$ decision trees whose error is at most
\begin{eqnarray*}
{\frac 1 {2^K}} \mu({\cal C}_{(1,1)})+{\frac 1 {2^K}} \mu({\cal C}_{(0,0)})
\leq  {\frac 1 {2^K}} .
\end{eqnarray*}

Finally, we analyze the size of the resulting trees.
The size can be upper-bounded as $O(
s
^{Y+1})$
if the number of $\land$ and $\lor$ operators for representing
a Boolean function over $t_1,\ldots,t_n$
corresponding to each resulting tree is $Y$.
For the case of $t_1 t_2$, the number of operators is
at most $2+K$.
For the case of $t_1 \lnon{t_2}$ (resp., $\lnon{t_1} t_2$),
the number of operators is 2.
For the case of $\lnon{t_1} \lnon{t_2}$,
the number of operators is at most $1+K$.
Hence, the total number of operators is
$(2+K)+2+2+(1+K)+3 = 2K+10$.
It is seen from Proposition~\ref{prop:conjunction} that
the resulting upper bound of the size of each tree is $O(
s
^{2K+11})$.
\end{proof}

It may be possible to develop a similar construction procedure
for the case of $c>1$ by modifying the construction in
Theorem~\ref{thm:majority}.
However, its probabilistic analysis would be quite difficult.
Instead, we use a simple recursive procedure.
First, we reduce the number of trees to $2m-3$ using
Theorem~\ref{thm:approx-minus2}.
Then, we reduce the number of trees
to $2m-5$ using Theorem~\ref{thm:approx-minus2} again.
We repeat this procedure until the number of trees becomes
$2m-1-2c$.

\begin{theorem}
Let ${\cal T}$ be a bag of $n=2m-1$ decision trees on a set of variables
$X=\{x_1,\ldots,x_{n'}\}$, where the size of each decision tree is at most $r$.
Then, for any positive integer constants $c$ and $K$,
there exists a bag of $2m-1-2c$ decision trees on $X$
such that the size of each bag is $O(r^{(2K+11)^c})$ and 
the error is at most ${\frac c {2^K}}$.
\label{thm:approx-general}
\end{theorem}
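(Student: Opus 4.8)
The plan is to prove the statement by applying Theorem~\ref{thm:approx-minus2} repeatedly, $c$ times in succession, each application removing one pair of trees while keeping the variable set $X$ and the probability distribution $D$ fixed throughout. Concretely, I would build a chain of bags ${\cal T}={\cal T}_0, {\cal T}_1, \ldots, {\cal T}_c$, where ${\cal T}_j$ is produced from ${\cal T}_{j-1}$ by a single invocation of Theorem~\ref{thm:approx-minus2}. Because ${\cal T}_{j-1}$ is a bag of $2m-1-2(j-1)$ trees, still an odd number, the theorem applies verbatim with $m$ replaced by $m-(j-1)$, and it outputs a bag ${\cal T}_j$ of $2m-1-2j$ trees with $err({\cal T}_j,{\cal T}_{j-1})\leq \frac{1}{2^K}$ (the per-step error bound $\binom{L}{K}/\binom{H}{K}\leq \frac{1}{2^K}$ is independent of the current value of $m$). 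After $c$ iterations ${\cal T}_c$ has exactly $2m-1-2c$ trees, the target count, which requires only $m\geq c+1$ so that the chain is well defined.

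Next I would track the tree sizes. Letting $A_j$ denote the maximum tree size in ${\cal T}_j$, Theorem~\ref{thm:approx-minus2} gives $A_j\leq C\,A_{j-1}^{2K+11}$ for an absolute constant $C$, starting from $A_0\leq r$. Unrolling this recurrence produces $A_c\leq C^{\sum_{j=0}^{c-1}(2K+11)^j}\,r^{(2K+11)^c}$; since $c$ and $K$ are constants the prefactor is a constant and $A_c=O(r^{(2K+11)^c})$, which is exactly the claimed size bound with the nested exponent arising from $c$-fold composition.

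The conceptually central step is controlling the accumulated error, and here the key observation is that $err(\cdot,\cdot)$ is the $D$-weight of the set of inputs on which two bags disagree and hence satisfies a triangle inequality. Indeed, if ${\cal T}_c(\xvec)\neq{\cal T}(\xvec)$ then ${\cal T}_j(\xvec)\neq{\cal T}_{j-1}(\xvec)$ for at least one index $j$, so the disagreement set of the pair $({\cal T}_c,{\cal T}_0)$ is contained in the union of the disagreement sets of the consecutive pairs; subadditivity of $D$ then yields $err({\cal T}_c,{\cal T})\leq \sum_{j=1}^{c} err({\cal T}_j,{\cal T}_{j-1})\leq c\cdot\frac{1}{2^K}=\frac{c}{2^K}$. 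This is what turns a sequence of per-step errors, each measured against the immediately preceding bag, into a single bound against the original ${\cal T}$.

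I expect the main obstacle to be less a matter of difficulty than of care: one must verify that Theorem~\ref{thm:approx-minus2} is genuinely reapplicable at each stage and that the errors add rather than compound. The recursion is inherently sequential, since every application selects its own permutation of the current trees (via Lemma~\ref{lem:perm-lemma}) from the weights $w(\bvec)$ computed for ${\cal T}_{j-1}$; the reason it all fits together is that $D$ and $X$ never change, which is precisely what licenses the triangle inequality and keeps the error linear in $c$ while the size grows only through the exponent $(2K+11)^c$.
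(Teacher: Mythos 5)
Your proof is correct and follows the paper's overall strategy exactly: apply Theorem~\ref{thm:approx-minus2} iteratively $c$ times (each intermediate bag still has an odd number of trees, so the theorem reapplies), and unroll the size recurrence $A_j = O(A_{j-1}^{2K+11})$ to get $O(r^{(2K+11)^c})$. The one place where you genuinely diverge is the error accounting, and your version is arguably cleaner. The paper, after each iteration, sets the weights of the already-misclassified samples to zero and rescales the remaining weights to sum to $1$ before invoking Theorem~\ref{thm:approx-minus2} again, arguing that the per-step error under the rescaled distribution dominates the actual error contributed by that step. You instead keep $D$ fixed throughout, apply Theorem~\ref{thm:approx-minus2} with the same $D$ at every stage (legitimate, since that theorem holds for an arbitrary distribution), and observe that the disagreement set of $({\cal T}_c,{\cal T}_0)$ is contained in the union of the consecutive disagreement sets, so subadditivity of $D$ gives $err({\cal T}_c,{\cal T})\leq \sum_{j=1}^{c} err({\cal T}_j,{\cal T}_{j-1}) \leq c/2^K$ directly. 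Both routes yield the same bound; yours dispenses with the reweighting entirely and makes explicit the point the paper leaves informal, namely why the per-step errors add rather than compound.
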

\begin{proof}
First, we analyze the size of decision trees in a bag.
At the first iteration,
the size increases from $r$ to $O(r^{2K+11})$ from
Theorem~\ref{thm:approx-minus2}.
Then, this increased size corresponds to the next $r$.
Since $K$ is a constant, at the second iteration,
the size increases to
\begin{eqnarray*}
O((r^{2K+11})^{2K+11}) & = & O(r^{(2K+11)^2}).
\end{eqnarray*}
Since we repeat this procedure $c$ times and $c$ is a constant,
the final size is $O(r^{(2K+11)^c})$.

Next, we analyze the error.
At the first iteration,
the error is at most ${\frac 1 {2^K}}$ from Theorem~\ref{thm:approx-minus2}.
Then, we set the weight of the erroneous samples to be 0 and
scaling up the weights of the other samples so that the total weight becomes
1.
Then, at the second iteration,
the error caused by the second iteration is at most ${\frac 1 {2^K}}$,
where the actual error is not greater than this because
the weights are scaled up.
Therefore, the total error is at most ${\frac 2 {2^K}}$.
Since we repeat this procedure $c$ time,
the final error is at most ${\frac c {2^K}}$.
\end{proof}

\section{Concluding remarks}

In this paper, we studied the trade-off between the number of nodes and
the number of trees in a bag of decision trees for representing a
given bag of $n$ decision trees.
As a main result, we showed that 
the majority function of $n$ variables, that naturally can be represented
by a bag of $n$ single node trees, can also be represented
by a bag of $n'=n-c$ trees, each of polynomial size, in case $c$ is a given
constant. If $c$ is not a constant,
the derived size is exponential, which is consistent with
an exponential lower bound given in \cite{kumano22}.
However, the gap between the upper and lower bounds is still large.
Therefore, the narrowing of the gap is left as an open problem.

We also considered the general case of representing a given bag of $n$
decision trees using a bag of $n'=n-c$ decision trees, for constant $c$.
For this case we showed that if the bag is permitted to make classification 
errors with a small probability, then such a bag
with polynomial-size decision trees does indeed exist.
The question of whether there exists a bag of $n'$ 
	polynomial-size
decision trees
that makes no errors, is left as an open problem. 

\bibliographystyle{plain}

\bibliography{rfmajoarxiv}

\begin{thebibliography}{10}

\bibitem{amano18}
Kazuyuki Amano and Masafumi Yoshida.
\newblock Depth two (\emph{n}-2)-majority circuits for \emph{n}-majority.
\newblock {\em {IEICE} Trans. Fundam. Electron. Commun. Comput. Sci.},
  101-A(9):1543--1545, 2018.

\bibitem{audemard22}
Gilles Audemard, Steve Bellart, Louenas Bounia, Fr{\'{e}}d{\'{e}}ric Koriche,
  Jean{-}Marie Lagniez, and Pierre Marquis.
\newblock Trading complexity for sparsity in random forest explanations.
\newblock In {\em Proceedings of Thirty-Sixth {AAAI} Conference on Artificial
  Intelligence (AAAI-2022)}, pages 5461--5469, 2022.

\bibitem{biau12}
G{\'{e}}rard Biau.
\newblock Analysis of a random forests model.
\newblock {\em J. Mach. Learn. Res.}, 13:1063--1095, 2012.

\bibitem{breiman01}
Leo Breiman.
\newblock Random forests.
\newblock {\em Mach. Learn.}, 45(1):5--32, 2001.

\bibitem{chistopolskaya22}
Anastasiya Chistopolskaya and Vladimir~V. Podolskii.
\newblock On the decision tree complexity of threshold functions.
\newblock {\em Theory Comput. Syst.}, 66(6):1074--1098, 2022.

\bibitem{delgado14}
Manuel~Fern{\'{a}}ndez Delgado, Eva Cernadas, Sen{\'{e}}n Barro, and
  Dinani~Gomes Amorim.
\newblock Do we need hundreds of classifiers to solve real world classification
  problems?
\newblock {\em J. Mach. Learn. Res.}, 15(1):3133--3181, 2014.

\bibitem{engels20}
Christian Engels, Mohit Garg, Kazuhisa Makino, and Anup Rao.
\newblock On expressing majority as a majority of majorities.
\newblock {\em {SIAM} J. Discret. Math.}, 34(1):730--741, 2020.

\bibitem{goldmann92}
Mikael Goldmann, Johan H{\aa}stad, and Alexander~A. Razborov.
\newblock Majority gates {VS.} general weighted threshold gates.
\newblock {\em Comput. Complex.}, 2:277--300, 1992.

\bibitem{kulikov19}
Alexander~S. Kulikov and Vladimir~V. Podolskii.
\newblock Computing majority by constant depth majority circuits with low
  fan-in gates.
\newblock {\em Theory Comput. Syst.}, 63(5):956--986, 2019.

\bibitem{kumano22}
So~Kumano and Tatsuya Akutsu.
\newblock Comparison of the representational power of random forests, binary
  decision diagrams, and neural networks.
\newblock {\em Neural Comput.}, 34(4):1019--1044, 2022.

\bibitem{lorenzen19}
Stephan~Sloth Lorenzen, Christian Igel, and Yevgeny Seldin.
\newblock On pac-bayesian bounds for random forests.
\newblock {\em Mach. Learn.}, 108(8-9):1503--1522, 2019.

\bibitem{magniez16}
Fr{\'{e}}d{\'{e}}ric Magniez, Ashwin Nayak, Miklos Santha, Jonah Sherman,
  G{\'{a}}bor Tardos, and David Xiao.
\newblock Improved bounds for the randomized decision tree complexity of
  recursive majority.
\newblock {\em Random Struct. Algorithms}, 48(3):612--638, 2016.

\bibitem{oshiro12}
Thais~Mayumi Oshiro, Pedro~Santoro Perez, and Jos{\'{e}}~Augusto Baranauskas.
\newblock How many trees in a random forest?
\newblock In {\em Proceedings of 8th International Conference on Machine
  Learning and Data Mining in Pattern Recognition (MLDM 2012)}, volume 7376 of
  {\em Lecture Notes in Computer Science}, pages 154--168, 2012.

\bibitem{siu95}
Kai-Yeung Siu, Vwani Roychowdhury, and Thomas Kailath.
\newblock {\em Discrete Mathematics of Neural Networks, Selected Topics}.
\newblock Prentice Hall, 1995.

\bibitem{testa19}
Eleonora Testa, Mathias Soeken, Luca~Gaetano Amar{\`{u}}, Winston Haaswijk, and
  Giovanni~De Micheli.
\newblock Mapping monotone boolean functions into majority.
\newblock {\em {IEEE} Trans. Computers}, 68(5):791--797, 2019.

\bibitem{valiant84}
Leslie~G. Valiant.
\newblock A theory of the learnable.
\newblock {\em Commun. {ACM}}, 27(11):1134--1142, 1984.

\bibitem{xu21}
Haoyin Xu, Kaleab~A. Kinfu, Will LeVine, Sambit Panda, Jayanta Dey, Michael
  Ainsworth, Yu-Chung Peng, Madi Kusmanov, Florian Engert, Christopher~M.
  White, Joshua~T. Vogelstein, and Carey~E. Priebe.
\newblock When are deep networks really better than decision forests at small
  sample sizes, and how?, 2021.

\bibitem{zhou17}
Zhi-Hua Zhou and Ji~Feng.
\newblock Deep forest: Towards an alternative to deep neural networks.
\newblock In {\em Proceedings of the Twenty-Sixth International Joint
  Conference on Artificial Intelligence, {IJCAI-17}}, pages 3553--3559, 2017.

\end{thebibliography}

\end{document}